\newtheorem{theorem}{Theorem}[section]
\begin{document}

\title{Exploration in Action Space}


\author{\authorblockN{Anirudh Vemula}
\authorblockA{Robotics Institute\\
Carnegie Mellon University\\
Pittsburgh, Pennsylvania 15213\\
Email: vemula@cmu.edu}
\and
\authorblockN{Wen Sun}
\authorblockA{Robotics Institute\\
Carnegie Mellon University\\
Pittsburgh, Pennsylvania 15213\\
Email: wensun@cs.cmu.edu}
\and
\authorblockN{J. Andrew Bagnell}
\authorblockA{Robotics Institute\\
Carnegie Mellon University\\
Pittsburgh, Pennsylvania 15213\\
Email: dbagnell@ri.cmu.edu}}


%

\maketitle

\begin{abstract}
Parameter space exploration methods with black-box optimization have
recently been shown to outperform state-of-the-art approaches in
continuous control reinforcement learning domains. In this paper, we examine reasons why
these methods work better and the situations
in which they are worse than traditional action space exploration
methods. Through a simple theoretical analysis, we show that when the parametric complexity required to solve
the reinforcement learning problem is greater than the
product of 
action space dimensionality and horizon length, exploration
in action space is preferred. 
This is also shown empirically by comparing simple exploration methods
on several toy problems.

\end{abstract}

\IEEEpeerreviewmaketitle

\section{Introduction}

Recently, in a series of blog posts\footnote{\url{http://www.argmin.net/2018/03/20/mujocoloco/}} and in \cite{mania2018simple}, Ben Recht and colleagues reached the following conclusion:
``Our findings contradict the common belief that policy gradient techniques, which rely on exploration in the action space, are more sample efficient than methods based on
finite-differences.''

That's a conclusion that we have often felt has much merit. In a survey (with Jens Kober and Jan Peters) \cite{kober2013reinforcement}, we wrote: 

\emph{
``Black box methods are general stochastic optimization algorithms (Spall, 2003) using only the expected return of policies, estimated by sampling, and do not leverage any of the internal structure of the RL problem. These may be very sophisticated techniques (Tesch et al., 2011) that use response surface estimates and bandit-like strategies to achieve good performance. White box methods take advantage of some of additional structure within the reinforcement learning domain, including, for instance, the (approximate) Markov structure of problems, developing approximate models, value-function estimates when available (Peters and Schaal, 2008c), or even simply the causal ordering of actions and rewards. A major open issue within the field is the relative merits of the these two approaches: in principle, white box methods leverage more information, but with the exception of models (which have been demonstrated repeatedly to often make tremendous performance improvements, see Section 6), the performance gains are traded-off with additional assumptions that may be violated and less mature optimization algorithms. Some recent work including (Stulp and Sigaud, 2012; Tesch et al., 2011) suggest that much of the benefit of policy search is achieved by black-box methods.''
}

Many empirical examples--the classic Tetris \cite{Thiery2009BuildingCF} for instance--demonstrate that Cross-Entropy or other (fast) black-box heuristic methods are generally far superior to any policy gradient method and that policy gradient methods often achieve orders of magnitude better performance than methods demanding still more structure like, \textit{e.g.} temporal difference learning \cite{Sutton1998}.  
In this paper, we set out to study why black-box parameter space exploration methods work better and in what situations can we expect them to perform worse than traditional action space exploration methods.

\section{The Structure of Policies}

Action space exploration methods, like REINFORCE \cite{williams1992simple}, SEARN \cite{daume2009search}, PSDP \cite{bagnell2004policy}, AGGREVATE \cite{ross2014reinforcement,sun2017deeply}, LOLS \cite{chang2015learning}, leverage more structure than parameter space exploration methods. More specifically, 
they understand the relationship (\textit{e.g.},  the Jacobian) between a policy's parameters and its outputs.  We could ask: \textit{Does this matter?} In the regime of large parameter space and small output space problems as explored often by our colleagues, like atari games \cite{mnih2015human}, it might.
(Typical implementations also leverage causality of reward structure as well, although one might expect that is relatively minor.)

In particular, the intuition behind the use of action space exploration techniques is that they should perform well when the action space is quite small compared to the parametric complexity
required to solve the Reinforcement Learning problem.

\section{Experiments}
We test this intuition across three experiments: MNIST, Linear Regression and LQR. The code for all these experiments is published at \url{https://github.com/LAIRLAB/ARS-experiments}.

\subsection{MNIST}

To investigate this we, like \cite{argmin}, 
consider some toy RL problems, beginning with a single time step MDP.
In particular, we start with a classic problem: MNIST digit recognition. To put in a bandit/RL framework, we consider a $+1$ reward for getting the digit correct and
a $-1$ reward for getting it wrong. We use a LeNet-style architecture\footnote{Two convolution layers each with $5\times 5$ kernels with $10$ and $20$ output channels, followed by two fully connected layers of $320$ and $50$ units and a output softmax layer resulting in 1-hot encoding of dimensionality $10$}, \cite{lecun1998gradient}. The total number of trainable parameters in this architecture is $d = 21840$. The experimental setup is described in greater detail in Appendix \ref{sec:mnist-details}.

We then compare the learning curves for vanilla REINFORCE 
with supervised learning and with ARS \textbf{V2-t}, an augmented random search in parameter space procedure introduced in \citet{mania2018simple}. Figure \ref{fig:mnist} demonstrates the results where solid lines represent mean test accuracy over $10$ random seeds and the shaded region corresponds to $\pm 1$ standard deviation. Clearly, in this setting where the parameter space dimensionality significantly exceeds the action space dimensionality, we can see that action space exploration methods such as REINFORCE outperform parameter space exploration methods like ARS.

\begin{figure}[t]
    \centering
    \includegraphics[width=0.9\linewidth]{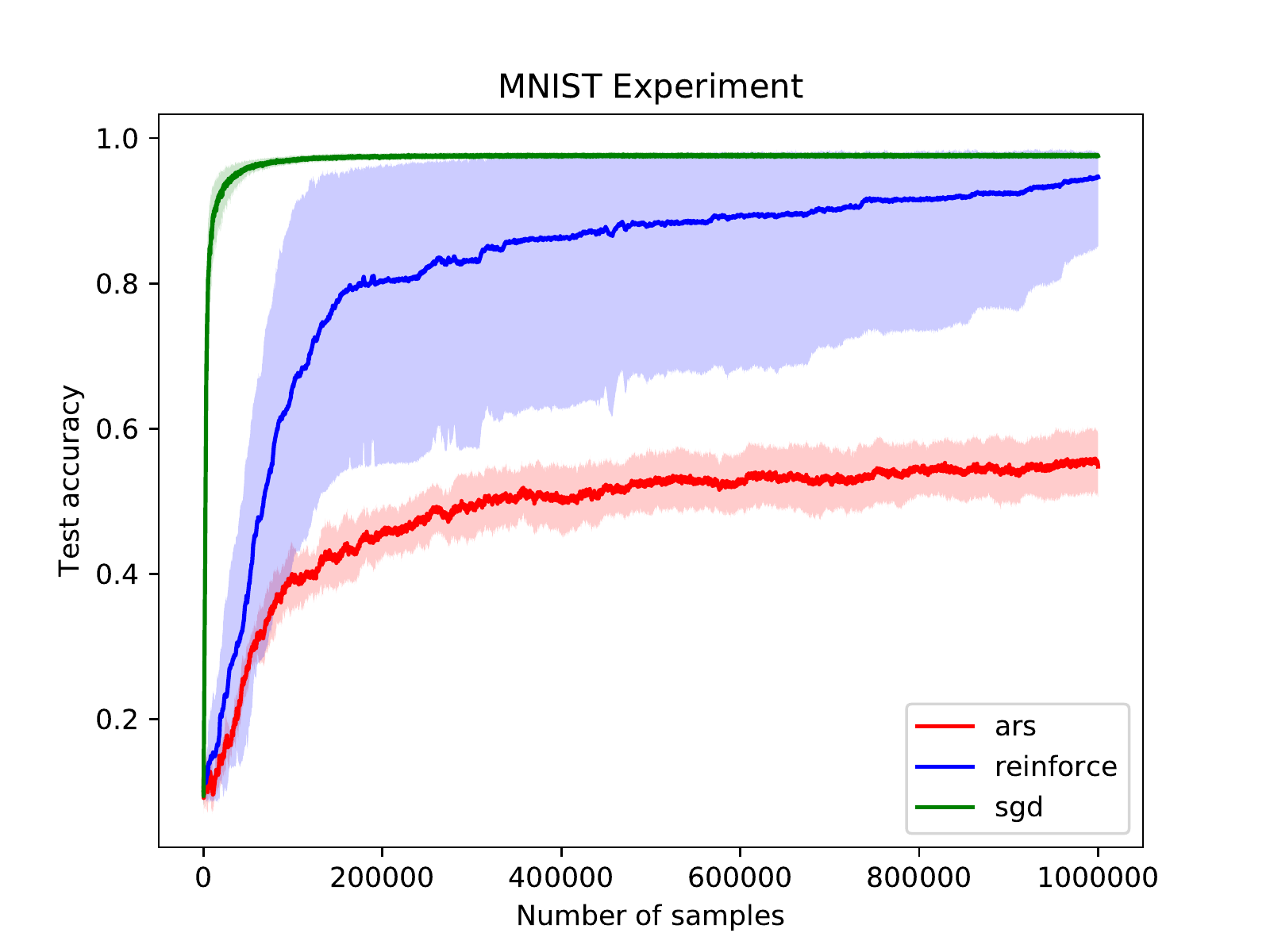}
    \caption{Test accuracy of different approaches against number of samples}
    \label{fig:mnist}
\end{figure}

\subsection{Linear Regression}

Following the heuristic of the \textit{Linearization Principle} introduced in \cite{argmin},
we attempt to get a handle on the trade-off between sample complexity in parameter space and complexity in action space by considering another simple, one step problem: linear regression
with a \emph{single} output variable and $d$ input dimensionality, and thus $d$ parameters. We consider learning a random linear function. 

Before empirically studying REINFORCE and ARS, we first perform a regret analysis (Appendix \ref{sec:linreg-analysis}) on online linear regression for three learning strategies: (1) online gradient descent, (2) exploration in action space, and (3) exploration in parameter space, which correspond to full information setting, linear contextual bandit setting, and bandit setting respectively. The online gradient descent approach is simply OGD from \citet{Zinkevich2003_ICML} applied to full information online linear regression setting, and for exploration in parameter space, we simply used the BGD algorithm from \citet{flaxman2005online}, which completely ignores the properties of linear regression setting and works in a bandit setting. The algorithm for random exploration in action space---possibly the simplest linear contextual bandit algorithm, shown in Alg.~\ref{alg:random_search_action}, operates in the middle: it has access to  feature vectors and performs random search in prediction space to form estimations of gradients.  The analysis of all three algorithms is performed in the online setting: no statistical assumptions on the sequence of linear loss functions, and multi-point query per loss function \cite{agarwal2010optimal} is not allowed.\footnote{Note that the ARS algorithms presented in \citet{mania2018simple} actually take advantage of the reset property of episodic RL setting and perform two-point feedback query to reduce the variance of gradient estimations}

The detailed algorithms and analysis are provided in Appendix~\ref{sec:linreg-details}. The main difference between exploration in action space and exploration in parameter space is that exploration in action space can take advantage of the fact that the predictor it is learning is \emph{linear} and it has access to the linear feature vector (i.e., the Jacobian of the predictor). The key advantage of exploration in action space over exploration in parameter space is that \emph{exploration in action space is input-dimension free}.\footnote{It will dependent on the output dimension, if one considers multivariate linear regression.} More specifically, one can show that in order to achieve $\epsilon$ average regret, the algorithm (Alg.~\ref{alg:random_search}) performing exploration in parameter space requires $O(\frac{d^2}{\epsilon^4})$ samples (we ignore problem specific parameters such as the maximum norm of feature vector, the maximum norm of the linear predictor, and the maximum value of prediction, which we assume are constants), while the algorithm (Alg.~\ref{alg:random_search_action}) performs exploration in action space requires $O(\frac{1}{\epsilon^4})$, which is not explicitly dependent on $d$. 

We empirically compare the test squared loss of REINFORCE, natural REINFORCE (which simply amounts to whitening of input features) \cite{kakade2002natural} and the ARS \textbf{V2-t} method discussed in \citet{mania2018simple} with
classic follow-the-regularized-leader (Supervised Learning). The results are shown in Figures \ref{fig:lin10}, \ref{fig:lin100} and \ref{fig:lin1000}, where solid lines represent mean test squared loss over 10 random seeds and the shaded region corresponds to $\pm 1$ standard deviation. The learning curves match our expectations, and moreover show that this bandit style REINFORCE lies between the curves of supervised learning and parameter space exploration: that is action space exploration takes advantage of the Jacobian of the policy itself and can learn much more quickly.

\begin{figure*}[t]
  \centering
  \subfloat[$d=10$]{\includegraphics[width=0.3\linewidth]{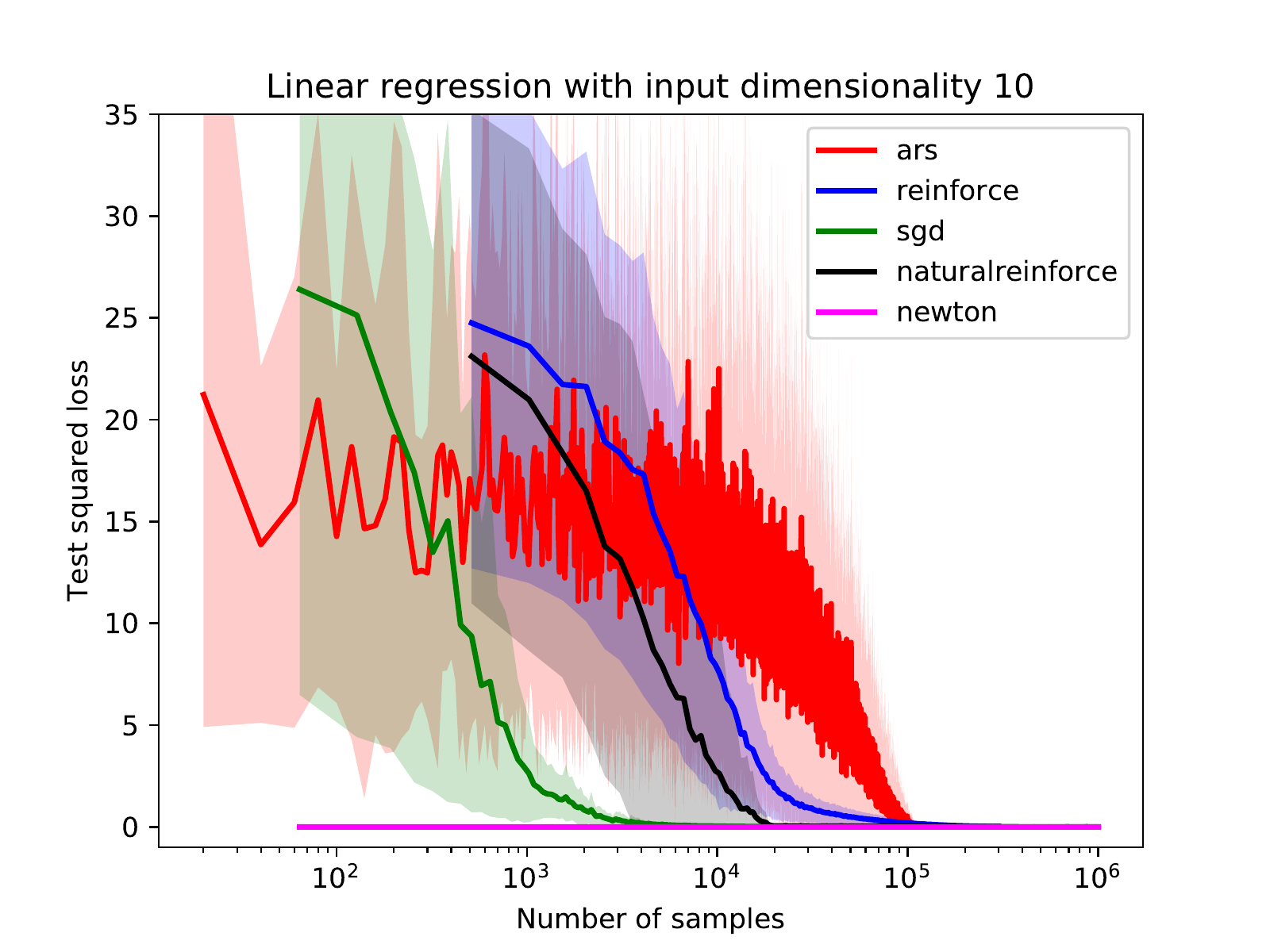}\label{fig:lin10}}
  \subfloat[$d=100$]{\includegraphics[width=0.3\linewidth]{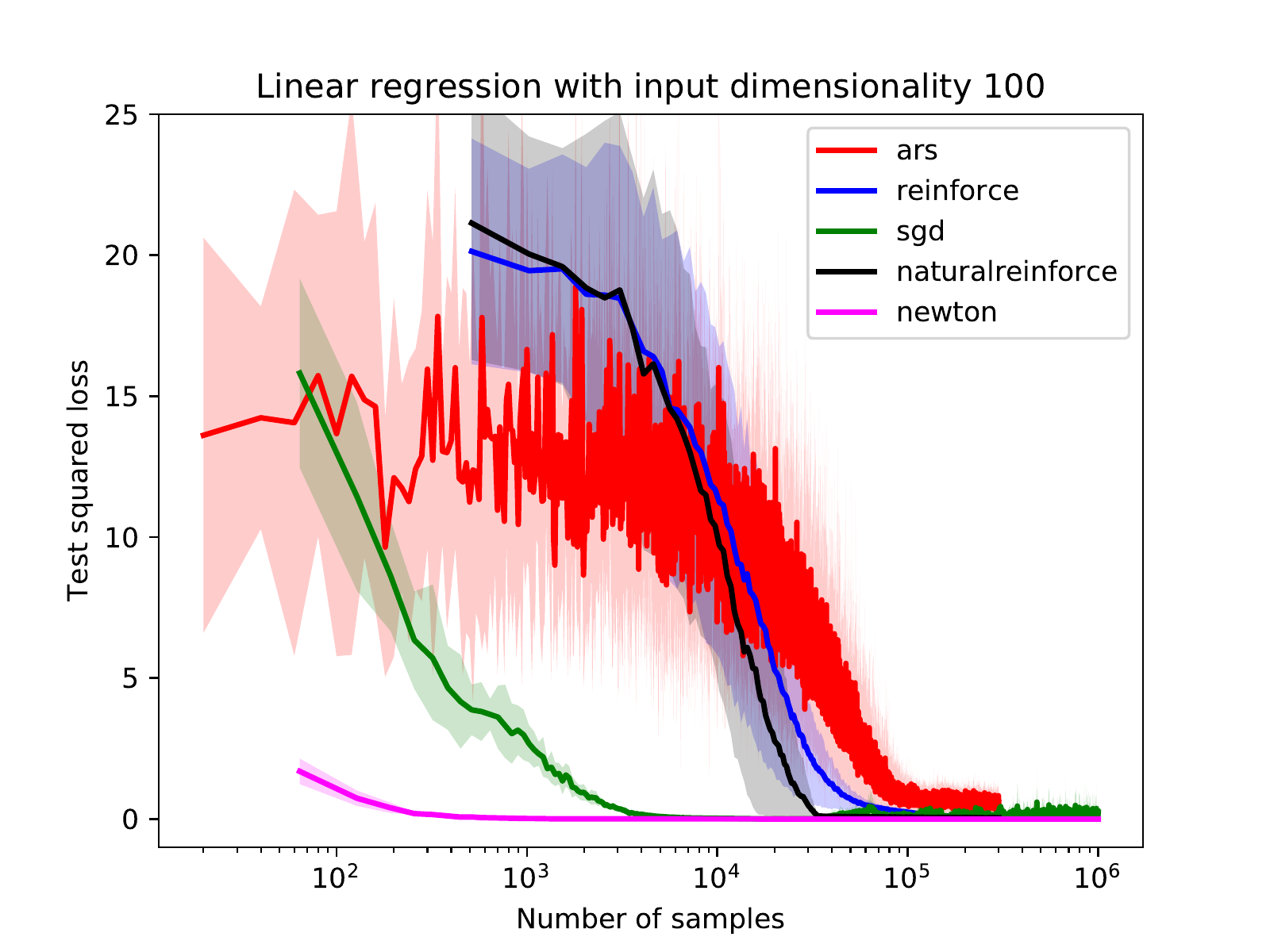}\label{fig:lin100}}
  \subfloat[$d=1000$]{\includegraphics[width=0.3\linewidth]{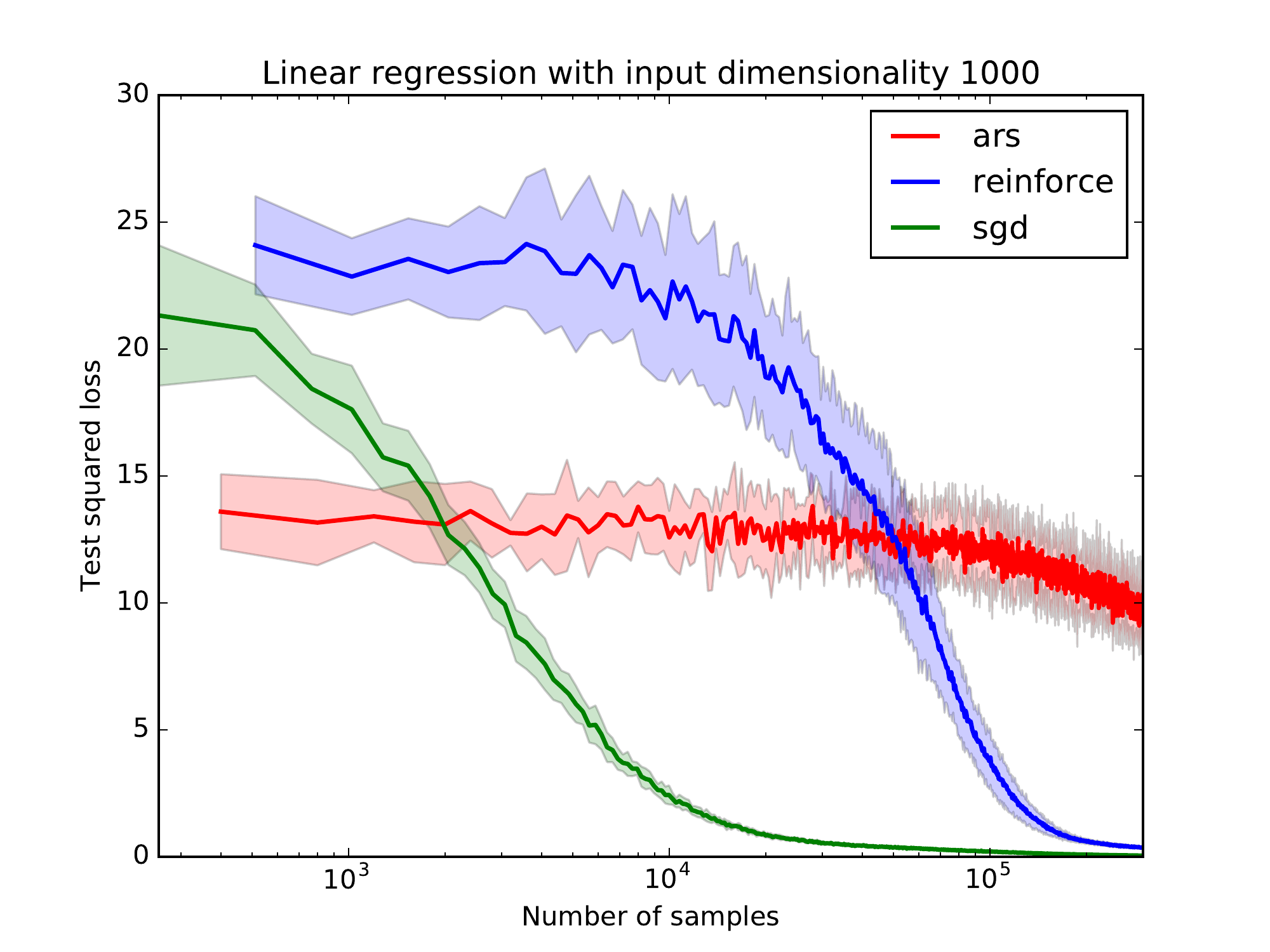}\label{fig:lin1000}}
  \caption{Linear Regression Experiments with varying input dimensionality}
\end{figure*}

\subsection{LQR}

Finally, we consider what happens as we extend the time horizon. In Appendix~\ref{sec:analysis-rl}, we consider finite horizon ($H>1$) optimal control task with deterministic dynamics, fixed initial state and a linear stationary policy. We show that we can estimate the policy gradient via a random search in parameter space as ARS did (Eq.~\ref{eq:parameter_space_rl} in Appendix~\ref{sec:analysis-rl}), or we can do a random search in action space across all time steps independently (Eq.~\ref{eq:action_space_rl} in Appendix~\ref{sec:analysis-rl}). Comparing the norm of both gradient estimates, we can see that the major difference is that the norm of the gradient estimate from random exploration in parameter space (Eq.~\ref{eq:parameter_space_rl}) linearly scales with the dimensionality of state space (i.e., dimensionality of parameter space as we assume linear policy), while the norm of the gradient estimate from random search in action space (Eq.~\ref{eq:action_space_rl}) linearly scales with the product of horizon and action space dimensionality. Hence, when the dimensionality of the state space is smaller than the product of horizon and action space dimensionality, one may prefer random search in parameter space, otherwise random search in action space is preferable. Note that for most of the continuous control tasks in OpenAI gym \cite{openaigym}, the horizon is significantly larger than the state space dimensionality.\footnote{Take Walker2d-v2 as an example, $H$ is usually equal to 1000. The action space dimension is $6$, and the dimension of the state space is $17$. Hence random exploration in action space is actually randomly searching in 6000 dimension space, while random search in parameter space is searching in 17 dimension space.} This explains why ARS \cite{mania2018simple} outperforms most of the action space exploration methods in these tasks.


The simplest setting to empirically evaluate this is where we'd all likely agree that a model based method would be the preferred approach: a finite-horizon Linear Quadratic Regulator problem with 1-d control space and a $100$-d state space.  We then compare random search (ARS \textbf{V1-t} from \citet{mania2018simple}) versus REINFORCE (with ADAM \cite{kingma2014adam} as the underlying optimizer), in terms of the number of samples they need to train a stationary policy that reaches within 5\% error of the non-stationary optimal policy's performance with respect to the horizon $H$ ranging from $10$ to $160$. Fig.~\ref{fig:LQG} shows the comparison where the statistics are averaged over 10 random seeds (mean $\pm$ standard error). 

\begin{figure}[t]
    \centering
    \includegraphics[width=\linewidth]{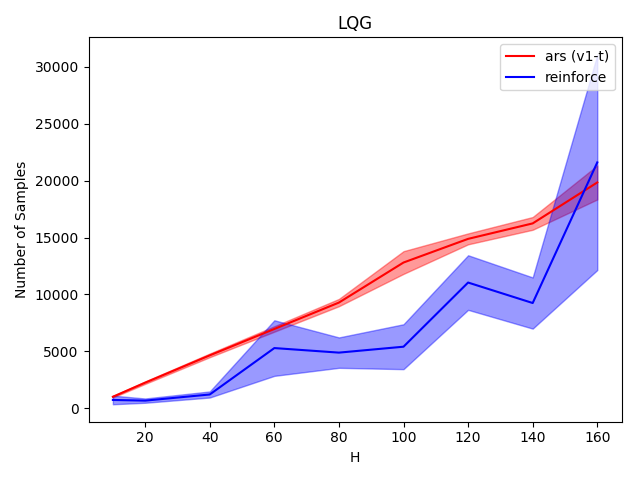}
    \caption{LQG with state dimensionality $d = 100$ with varying horizon $H$}
    \label{fig:LQG}
\end{figure}  

From Fig.~\ref{fig:LQG} we see that as $H$ increases, both algorithms require larger number of samples.\footnote{Our simple analysis on finite horizon optimal control with deterministic dynamic shows that both gradient estimators' norm linearly depends on the objective value measured at the current policy. As $H$ increases, the total cost increases.} While ARS is stable across different random seeds, we found reinforce becomes more and more sensitive to random seeds when $H$ is larger, and performance becomes less stable. However notice that when $H$ is small (e.g., $H\leq40$), we see that REINFORCE has lower variance as well and can consistently outperform ARS. Though we would expect that REINFORCE would require more samples than ARS when $H\geq 100$, which is the dimension of state space, in this experiment we only notice this phenomenon at $H=160$, though the variance of REINFORCE is already terrible at that point.


\section{Conclusion}
\label{sec:conclusion}
\citet{mania2018simple} have shown that simple random search in parameter space is a competitive alternative to traditional action space exploration methods. In our work, we show that this is only true for domains where the parameter space dimensionality is significantly smaller than the product of action space dimensionality and horizon length. For domains where this does not hold true, action space exploration methods such as REINFORCE \cite{williams1992simple} are more sample efficient as they do not have an explicit dependence on parameter space dimensionality.

\section{Acknowledgements}

We thank the entire LairLab for stimulating discussions, and Ben Recht for his interesting blog posts.



\bibliographystyle{plainnat}
\bibliography{reference}

\begin{thebibliography}{19}
\providecommand{\natexlab}[1]{#1}
\providecommand{\url}[1]{\texttt{#1}}
\expandafter\ifx\csname urlstyle\endcsname\relax
  \providecommand{\doi}[1]{doi: #1}\else
  \providecommand{\doi}{doi: \begingroup \urlstyle{rm}\Url}\fi

\bibitem[arg()]{argmin}
An outsider's tour of reinforcement learning.
\newblock \url{http://www.argmin.net/2018/05/11/outsider-rl/}.
\newblock Accessed: 2019-05-26.

\bibitem[Agarwal et~al.(2010)Agarwal, Dekel, and Xiao]{agarwal2010optimal}
Alekh Agarwal, Ofer Dekel, and Lin Xiao.
\newblock Optimal algorithms for online convex optimization with multi-point
  bandit feedback.
\newblock In \emph{COLT}, pages 28--40. Citeseer, 2010.

\bibitem[Bagnell et~al.(2004)Bagnell, Kakade, Schneider, and
  Ng]{bagnell2004policy}
J~Andrew Bagnell, Sham~M Kakade, Jeff~G Schneider, and Andrew~Y Ng.
\newblock Policy search by dynamic programming.
\newblock In \emph{Advances in neural information processing systems}, pages
  831--838, 2004.

\bibitem[Brockman et~al.(2016)Brockman, Cheung, Pettersson, Schneider,
  Schulman, Tang, and Zaremba]{openaigym}
Greg Brockman, Vicki Cheung, Ludwig Pettersson, Jonas Schneider, John Schulman,
  Jie Tang, and Wojciech Zaremba.
\newblock Openai gym, 2016.

\bibitem[Chang et~al.(2015)Chang, Krishnamurthy, Agarwal, Daume, and
  Langford]{chang2015learning}
Kai-wei Chang, Akshay Krishnamurthy, Alekh Agarwal, Hal Daume, and John
  Langford.
\newblock Learning to search better than your teacher.
\newblock In \emph{ICML}, 2015.

\bibitem[Daum{\'e}~III et~al.(2009)Daum{\'e}~III, Langford, and
  Marcu]{daume2009search}
Hal Daum{\'e}~III, John Langford, and Daniel Marcu.
\newblock Search-based structured prediction.
\newblock \emph{Machine learning}, 2009.

\bibitem[Flaxman et~al.(2005)Flaxman, Kalai, and McMahan]{flaxman2005online}
Abraham~D Flaxman, Adam~Tauman Kalai, and H~Brendan McMahan.
\newblock Online convex optimization in the bandit setting: gradient descent
  without a gradient.
\newblock In \emph{Proceedings of the sixteenth annual ACM-SIAM symposium on
  Discrete algorithms}, pages 385--394. Society for Industrial and Applied
  Mathematics, 2005.

\bibitem[Kakade(2002)]{kakade2002natural}
Sham Kakade.
\newblock A natural policy gradient.
\newblock \emph{NIPS}, 2002.

\bibitem[Kingma and Ba(2014)]{kingma2014adam}
Diederik Kingma and Jimmy Ba.
\newblock Adam: A method for stochastic optimization.
\newblock \emph{arXiv preprint arXiv:1412.6980}, 2014.

\bibitem[Kober et~al.(2013)Kober, Bagnell, and Peters]{kober2013reinforcement}
Jens Kober, J~Andrew Bagnell, and Jan Peters.
\newblock Reinforcement learning in robotics: A survey.
\newblock \emph{The International Journal of Robotics Research}, 32\penalty0
  (11):\penalty0 1238--1274, 2013.

\bibitem[LeCun et~al.(1998)LeCun, Bottou, Bengio, and
  Haffner]{lecun1998gradient}
Yann LeCun, L{\'e}on Bottou, Yoshua Bengio, and Patrick Haffner.
\newblock Gradient-based learning applied to document recognition.
\newblock \emph{Proceedings of the IEEE}, 86\penalty0 (11):\penalty0
  2278--2324, 1998.

\bibitem[Mania et~al.(2018)Mania, Guy, and Recht]{mania2018simple}
Horia Mania, Aurelia Guy, and Benjamin Recht.
\newblock Simple random search provides a competitive approach to reinforcement
  learning.
\newblock \emph{arXiv preprint arXiv:1803.07055}, 2018.

\bibitem[Mnih et~al.(2015)]{mnih2015human}
Volodymyr Mnih et~al.
\newblock Human-level control through deep reinforcement learning.
\newblock \emph{Nature}, 2015.

\bibitem[Ross and Bagnell(2014)]{ross2014reinforcement}
Stephane Ross and J~Andrew Bagnell.
\newblock Reinforcement and imitation learning via interactive no-regret
  learning.
\newblock \emph{arXiv preprint arXiv:1406.5979}, 2014.

\bibitem[Sun et~al.(2017)Sun, Venkatraman, Gordon, Boots, and
  Bagnell]{sun2017deeply}
Wen Sun, Arun Venkatraman, Geoffrey~J Gordon, Byron Boots, and J~Andrew
  Bagnell.
\newblock Deeply aggrevated: Differentiable imitation learning for sequential
  prediction.
\newblock \emph{ICML}, 2017.

\bibitem[Sutton(1988)]{Sutton1998}
RichardS. Sutton.
\newblock Learning to predict by the methods of temporal differences.
\newblock \emph{Machine Learning}, 3:\penalty0 9--44, 1988.

\bibitem[Thiery and Scherrer(2009)]{Thiery2009BuildingCF}
Christophe Thiery and Bruno Scherrer.
\newblock Building controllers for tetris.
\newblock \emph{ICGA Journal}, 32:\penalty0 3--11, 2009.

\bibitem[Williams(1992)]{williams1992simple}
Ronald~J Williams.
\newblock Simple statistical gradient-following algorithms for connectionist
  reinforcement learning.
\newblock \emph{Machine learning}, 1992.

\bibitem[Zinkevich(2003)]{Zinkevich2003_ICML}
Martin Zinkevich.
\newblock {Online Convex Programming and Generalized Infinitesimal Gradient
  Ascent}.
\newblock In \emph{ICML}, 2003.

\end{thebibliography}

\appendix

\subsection{Detailed Algorithms and Analysis for Linear Regression}
\label{sec:linreg-analysis}

We present our algorithms in the online learning setting with square loss function at time step $t$ as $\ell_t(w) = (w^T x_t - y_t)^2$, where $x_t\in\mathbb{R}^d, y_t\in\mathbb{R}$. We design three different levels of feedback: (1) For classic supervised learning, at each round the learner receives feature $x_t$ and then the learner makes a prediction $w_t^T x_t$; it will receives the loss $\ell_t(w_t) = (\hat{y}_t - y_t)^2$ together with $y_t$; (2) For random search in parameter space (e.g., ARS), we assume we operate in a full bandit setting: once the learner proposes some predictor $w_t$, it will only receives the loss measured at $w_t$: $\ell_t(w_t) = (w_t^T x_t- y_t)^2$ without explicit information on $x_t$ or $y_t$; (3) For random search in action space (e.g., REINFORCE), we operate in a linear contextual bandit setting: the learner receives feature $x_t$, and then the learner makes prediction $\hat{y}_t = w_t^T x_t$, it receives the loss measured at $\hat{y}_t$, i.e., $(y_t-\hat{y}_t)^2$ but no $y_t$. For simplicity in analysis, we assume the learner always chooses prediction $w$ from a pre-defined convex set $\mathbb{W}$: $w\in \mathbb{W}\subseteq \mathbb{R}^d$. 

Below we present three algorithms. The first algorithm, OGD from \cite{Zinkevich2003_ICML}, shown in Alg.~\ref{alg:Linear_ogd}, simply operates in the full information setting; the second algorithm, Random Search in Parameter Space, adopted from \cite{flaxman2005online},\footnote{The algorithm presented in \cite{flaxman2005online} deals with projection more rigorously: to make sure that the randomly perturbed predictor $w_t+\delta u$ also belongs to $\mathbb{W}$, we have to project $w_{t}'$ to a slightly shrunk convex set $(1-\alpha) \mathbb{W}$ with $\alpha\in (0,1)$. Here we simply assume it is acceptable to step outside $\mathbb{W}$ a little bit since $\delta$ will be extremely small when $T$ is large.} presented in Alg.~\ref{alg:random_search}, operates in a full bandit setting; the third algorithm, Random Search in Action Space, presented in Alg.~\ref{alg:random_search_action}, operates in a classic linear contextual bandit setting. Below we present the regret analysis for these three algorithms. 

\begin{algorithm}[h]
\caption{Online Gradient Descent \cite{Zinkevich2003_ICML}}
 \label{alg:Linear_ogd}
\begin{algorithmic}[1]
  \STATE {\bfseries Input:} Learning rate $\mu_t \in \mathbb{R}^+$
  \STATE Learner initializes $w_0\in\mathbb{W}$.
  \FOR {$t = 0$ to $T$}
    \STATE Learner receives $x_t$
    \STATE Learner makes prediction $\hat{y}_t = w_t^T x_t$. 
    \STATE Learner receives loss  $(\hat{y}_t - y_t)^2$ and $y_t$.
    \STATE Learner update: $w_{t+1}' = w_t - \mu_t (\hat{y}_t - y_t)x_t$.
    \STATE Projection $w_{t+1} = \arg\min_{w\in\mathbb{W}}\|w_{t+1}' - w\|_2^2$
  \ENDFOR
\end{algorithmic}
\end{algorithm}

\begin{algorithm}[h]
\caption{Random Search in Parameter Space (BGD \cite{flaxman2005online})}
 \label{alg:random_search}
\begin{algorithmic}[1]
  \STATE {\bfseries Input:} Learning rate $\mu_t\in\mathbb{R}^+$, finite difference parameter $\delta\in\mathbb{R}^+$.
  \STATE Learner initializes $w_0\in\mathbb{W}$.
  \FOR {$t = 0$ to $T$}
    \STATE Learner samples $u$ uniformly from the surface of $d$-dim sphere.
    \STATE Learner chooses predictor $w_t + \delta u$. 
    \STATE Learner only receives loss signal $\ell_t$, which is $((w_t+\delta u)^T x_t - y_t)^2$.
    \STATE Learner update: $w_{t+1}' = w_t - \mu_t \frac{\ell_t d}{\delta}u$.
    \STATE Projection $w_{t+1} = \arg\min_{w\in\mathbb{W}}\|w_{t+1}'-w\|_2^2$.
  \ENDFOR
\end{algorithmic}
\end{algorithm}

\begin{algorithm}[h]
\caption{Random Search in Action Space}
 \label{alg:random_search_action}
\begin{algorithmic}[1]
  \STATE {\bfseries Input:} Learning rate $\mu_t\in\mathbb{R}^+$, finite difference parameter $\delta\in\mathbb{R}^+$.
  \STATE Learner initializes $w_0\in\mathbb{W}$.
  \FOR {$t = 0$ to $T$}
    \STATE Learner receives feature $x_t$
    \STATE Learner samples $e$ uniformly from $\{-1,1\}$
    \STATE Learner makes a prediction $\hat{y}_t = w_t^T x_t + \delta e$
    \STATE Learner only receives loss signal $\ell_t$, which is $(\hat{y}_t - y_t)^2$
    \STATE Learner update: $w_{t+1}' = w_t - \mu_t \frac{\ell_te}{\delta}x_t$.
    \STATE Projection $w_{t+1} = \arg\min_{w\in\mathbb{W}}\|w_{t+1}'-w\|_2^2$.
  \ENDFOR
\end{algorithmic}
\end{algorithm}

We assume $w$ is bounded as $\|w\|_2\leq \mathcal{W}\in\mathbb{R}^+$, $x$ is bounded $\|x\|_2\leq \mathcal{X}\in\mathbb{R}^+$, regression target $y$ is bounded: $\|y\|\leq \mathcal{Y}\in\mathbb{R}^+$, to make sure the loss is bounded $|w^Tx - y| \leq C\in\mathbb{R}^+$, and the gradient is bounded $\|\nabla_{w}\ell_t(w)\|_2\leq C\mathcal{X}$. Note that with these assumptions, the loss function $\ell_t(w)$ is Lipschitz continuous with Lipschitz constant $L\leq (\mathcal{W}\mathcal{X}+\mathcal{Y})\mathcal{X}$. We assume that these constants, $\mathcal{W}$, $\mathcal{X}$, $\mathcal{Y}$, are independent of the feature dimension $d$. For the following analysis, we may omit constants that are independent of $d$ and $T$, but will keep the dependency on $d$ or $T$ explicitly.

\begin{theorem}
After $T$ rounds, with $\mu_t = \frac{\mathcal{W}}{C\mathcal{X}\sqrt{t}}$ in Alg.~\ref{alg:Linear_ogd}, Alg.~\ref{alg:Linear_ogd} has regret:
\begin{align}
\label{eq:ogd}
    \sum_{t=0}^T \ell_t(w_t) - \min_{x^*\in\mathbb{W}}\ell_t(w^*) \leq \mathcal{W}C\mathcal{X}\sqrt{T}.
\end{align}
With $\mu_t = \frac{\mathcal{W}\delta}{d(C^2+\mathcal{X}^2)\sqrt{T}}$ and $\delta = T^{-0.25}\sqrt{\frac{\mathcal{W}d(C^2+\mathcal{X}^2)}{2L}}$ in Alg.~\ref{alg:random_search}, Alg.~\ref{alg:random_search} incurs regret:
\begin{align}
\label{eq:random_para}
    \mathbb{E}\left[\sum_{t=0}^T \ell_t(w_t)\right] - \min_{w^*\in\mathbb{W}}\sum_{t=0}^T \ell_t(w^*) \leq \sqrt{\mathcal{W}d(C^2+\mathcal{X}^2)L} T^{3/4}
\end{align}
With $\mu_t = \frac{\mathcal{W}\delta}{(C^2+1)\mathcal{X}\sqrt{T}}$ and $\delta = T^{-0.25}\sqrt{\frac{\mathcal{W}(C^2+1)\mathcal{X}}{2C}}$, Alg.~\ref{alg:random_search_action}, Alg.~\ref{alg:random_search_action} incurs regret:
\begin{align}
\label{eq:random_action}
    \mathbb{E}\left[\sum_{t=0}^T \ell_t(w_t)\right] - \min_{w^*\in\mathbb{W}}\sum_{t=0}^T \ell_t(w^*) \leq \sqrt{\mathcal{W}(C^2+1)\mathcal{X}C}T^{3/4}
\end{align}
\end{theorem}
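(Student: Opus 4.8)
The plan is to prove all three bounds from a single backbone: the classical online gradient descent regret inequality, applied in \eqref{eq:ogd} to the true losses and in \eqref{eq:random_para}, \eqref{eq:random_action} to a \emph{smoothed surrogate} whose gradient is exactly what the random-search update estimates. The backbone: for updates $w_{t+1} = \arg\min_{w\in\mathbb{W}}\|w - (w_t - \mu_t g_t)\|_2$, nonexpansiveness of the Euclidean projection gives, for any $w^*\in\mathbb{W}$, $\|w_{t+1}-w^*\|_2^2 \le \|w_t - w^*\|_2^2 - 2\mu_t g_t^\top(w_t - w^*) + \mu_t^2\|g_t\|_2^2$, hence $\sum_{t=0}^T g_t^\top(w_t - w^*) \le \sum_t \frac{\|w_t - w^*\|_2^2 - \|w_{t+1}-w^*\|_2^2}{2\mu_t} + \frac12\sum_t \mu_t\|g_t\|_2^2$. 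Telescoping the first sum — trivially when $\mu_t$ is constant in $t$ (as in Alg.~\ref{alg:random_search} and Alg.~\ref{alg:random_search_action}), or with the usual $\sum_t(\tfrac1{\mu_t}-\tfrac1{\mu_{t-1}})\|w_t-w^*\|_2^2$ bookkeeping when $\mu_t \propto 1/\sqrt t$ (as in Alg.~\ref{alg:Linear_ogd}) — and using $\|w_t - w^*\|_2 \le 2\mathcal{W}$ reduces everything to a bound on $\|g_t\|_2$ and the given step-size schedule.

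For \eqref{eq:ogd} this is immediate: $g_t = \nabla\ell_t(w_t) = (\hat y_t - y_t)x_t$ with $\|g_t\|_2 \le C\mathcal{X}$, convexity of $\ell_t$ gives $\ell_t(w_t)-\ell_t(w^*) \le g_t^\top(w_t-w^*)$, and substituting $\mu_t = \mathcal{W}/(C\mathcal{X}\sqrt t)$ and $\sum_{t\le T}1/\sqrt t \le 2\sqrt T$ yields the $\mathcal{W}C\mathcal{X}\sqrt T$ rate after collecting constants. For \eqref{eq:random_para}, introduce the ball-smoothed loss $\hat\ell_t(w) = \mathbb{E}_{v\sim\mathrm{Ball}}[\ell_t(w+\delta v)]$, which is convex and, by the standard sphere-sampling identity (Flaxman et al.), satisfies $\nabla\hat\ell_t(w) = \tfrac{d}{\delta}\,\mathbb{E}_{u\sim\mathrm{Sphere}}[\ell_t(w+\delta u)\,u]$; thus the update direction $g_t = \tfrac{\ell_t d}{\delta}u$ of Alg.~\ref{alg:random_search} is an unbiased estimate of $\nabla\hat\ell_t(w_t)$ with $\|g_t\|_2 \le \tfrac{d}{\delta}\,O(C^2+\mathcal{X}^2)$, using $\ell_t \le (C+\delta\mathcal X)^2$ and folding the $\delta$-order cross terms into the constant (legitimate since the chosen $\delta$ shrinks with $T$). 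Running the backbone on the sequence $\hat\ell_t$ and paying the Lipschitz smoothing bias $|\hat\ell_t(w)-\ell_t(w)| \le L\delta$ at both $w_t$ and $w^*$ gives $\mathbb{E}[\sum_t\ell_t(w_t)] - \min_{w^*}\sum_t\ell_t(w^*) \lesssim \tfrac{\mathcal{W}^2}{\mu} + \mu T\tfrac{d^2(C^2+\mathcal{X}^2)^2}{\delta^2} + L\delta T$; the stated $\mu$ balances the first two into $\tfrac{\mathcal{W}d(C^2+\mathcal{X}^2)\sqrt T}{\delta}$, and the stated $\delta = T^{-1/4}\sqrt{\mathcal{W}d(C^2+\mathcal{X}^2)/(2L)}$ balances that against $L\delta T$, giving $\sqrt{\mathcal{W}d(C^2+\mathcal{X}^2)L}\,T^{3/4}$.

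For \eqref{eq:random_action} the only change is that the smoothing is one-dimensional, in prediction space: write $\ell_t(w) = f_t(w^\top x_t)$ with $f_t(p)=(p-y_t)^2$ and set $\hat\ell_t(w) = \mathbb{E}_{e\in\{\pm1\}}[f_t(w^\top x_t + \delta e)] = \tfrac12(f_t(w^\top x_t+\delta)+f_t(w^\top x_t-\delta))$, which is convex in $w$ with $\nabla_w\hat\ell_t(w) = \tfrac{f_t(w^\top x_t+\delta)-f_t(w^\top x_t-\delta)}{2\delta}x_t = \mathbb{E}_e[\tfrac{\ell_t e}{\delta}x_t]$. So the update direction $g_t = \tfrac{\ell_t e}{\delta}x_t$ of Alg.~\ref{alg:random_search_action} is again an unbiased stochastic gradient of $\hat\ell_t$, but now $\|g_t\|_2 \le \tfrac{\ell_t\mathcal{X}}{\delta} \le \tfrac{(C^2+1)\mathcal{X}}{\delta}$, \emph{with no factor of $d$} — this is the crux of the whole comparison. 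With smoothing bias $|\hat\ell_t(w)-\ell_t(w)| \le C\delta$ ($f_t$ being $2C$-Lipschitz on the relevant interval), repeating the same balancing with the stated $\mu_t$ and $\delta$ gives $\sqrt{\mathcal{W}(C^2+1)\mathcal{X}C}\,T^{3/4}$.

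The only genuinely non-routine ingredient is the sphere-sampling gradient identity behind \eqref{eq:random_para}, which I would cite rather than rederive; the one spot needing care is the constant-chasing for the perturbed-loss bounds — i.e.\ justifying that the $\delta$-dependent cross terms in $\ell_t(w_t+\delta u)$ and $(\hat y_t-y_t)^2$ can be absorbed into the $(C^2+\mathcal{X}^2)$ and $(C^2+1)$ constants — which is fine precisely because the optimal $\delta$ tends to $0$ as $T\to\infty$, but should be stated explicitly. The rest is the standard Zinkevich/Flaxman template with the step sizes and finite-difference parameters tuned exactly as in the statement.
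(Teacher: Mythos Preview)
Your proposal is correct and follows essentially the same route as the paper: Zinkevich's OGD bound for \eqref{eq:ogd}, and the Flaxman smoothed-surrogate-plus-OGD template for \eqref{eq:random_para} and \eqref{eq:random_action}, with the same gradient-norm bounds, Lipschitz smoothing biases, and $\mu$/$\delta$ balancing. The one small divergence is in \eqref{eq:random_action}: the paper smooths over the \emph{interval} $[-1,1]$, so that the gradient identity $\nabla_w\tilde\ell_t(w)=\tfrac{1}{\delta}\mathbb{E}_{e\sim\{\pm1\}}[e\,\ell_t\,x_t]$ is the literal one-dimensional Flaxman formula (ball $[-1,1]$, sphere $\{\pm1\}$); you instead smooth over the two points $\{\pm1\}$ directly, and your claimed identity $\nabla_w\hat\ell_t(w)=\tfrac{f_t(p+\delta)-f_t(p-\delta)}{2\delta}x_t$ is not a general fact about two-point averages---it holds here only because $f_t$ is quadratic, so you should say so explicitly (or just switch to the interval smoothing, which costs nothing).
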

\begin{proof}
Result in Eq~\ref{eq:ogd} is directly from \cite{Zinkevich2003_ICML} with the fact that $\|w\|_2\leq\mathcal{W}$ and $\|\nabla_{w}\ell_t(w)\|_2\leq C\mathcal{X}$ to any $w$ and $t$. 

To prove Eq.~\ref{eq:random_para} for Alg.~\ref{alg:random_search}, we use the proof techniques from \cite{flaxman2005online}. The proof is more simpler than the one in \cite{flaxman2005online} as we do not have to deal with shrinking and reshaping the predictor set $\mathbb{W}$.

Denote $u\sim \mathbb{B}$ as uniformly sampling $u$ from a $d$-dim unit ball, $u\sim\mathbb{S}$ as uniformly sampling $u$ from the $d$-dim unit sphere, and $\delta \in (0,1)$. Consider the loss function $\hat{\ell}_t(w_t) = \mathbb{E}_{v\sim \mathbb{B}}[\ell_t(w_t + \delta v)]$, which is a smoothed version of $\ell_t(w_t)$. It is shown in \cite{flaxman2005online} that the gradient of $\hat{\ell}_t$ with respect to $w$ is:
\begin{align}
   \nabla_{w}\hat{\ell}_t(w)|_{w = w_t} &= \frac{d}{\delta}
                                          \mathbb{E}_{u\sim\mathbb{S}}[\ell_t(w_t
                                          +\delta u)u] \\
                                        &= \frac{d}{\delta}\mathbb{E}_{u\sim \mathbb{S}}[((w_t +\delta u)^T x_t - y_t)^2 u]
\end{align} Hence, the descent direction we take in Alg.~\ref{alg:random_search} is actually an unbiased estimate of $\nabla_{w}\hat{\ell}_t(w)|_{w=w_t}$. So Alg.~\ref{alg:random_search} can be considered as running OGD with an unbiased estimate of gradient on the sequence of loss $\hat{\ell}_t(w_t)$. It is not hard to show that for an unbiased estimate of $\nabla_{w}\hat{\ell}_t(w)|_{w=w_t}$ = $\frac{d}{\delta} (w_t + \delta u)^T x_t - y_t)^2 u$, the norm is bounded as $d(C^2 + \mathcal{X}^2)/\delta$. Now we can directly applying Lemma 3.1 from \cite{flaxman2005online}, to get:
\begin{align}
\label{eq:regret_on_surrogate}
   \mathbb{E}\left[\sum_{t=0}^T \hat{\ell}_t(w_t)\right] -
  \min_{w^*\in\mathbb{W}}\sum_{t=0}^T \hat{\ell}_t(w^*)
  \leq \frac{\mathcal{W}d(C^2+\mathcal{X}^2)}{\delta}\sqrt{T}
\end{align} We can bound the difference between $\hat{\ell}_t(w)$ and ${\ell}_t(w)$ using the Lipschitiz continuous property of $\ell_t$:
\begin{align}
|\hat{\ell}_t(w) - \ell_t(w) | & = |\mathbb{E}_{v\sim \mathbb{B}}[\ell_t(w+\delta v) - \ell_t(w)]| \nonumber\\
&\leq \mathbb{E}_{v\sim \mathbb{B}}[|\ell_t(w+\delta v) - \ell_t(w)|] \leq L\delta
\end{align} Substitute the above inequality back to Eq.~\ref{eq:regret_on_surrogate}, rearrange terms, we get:
\begin{align}
\mathbb{E}\left[ \sum_{t=0}^T \ell_t(w_t)  \right]  -
  \min_{w^*\in\mathbb{W}} \sum_{t=0}^T \ell_t(w^*) \leq
  &\frac{\mathcal{W}d(C^2+\mathcal{X}^2)}{\delta}\sqrt{T} \nonumber\\
  &+ 2LT\delta.
\end{align} By setting $\delta = T^{-0.25}\sqrt{\frac{\mathcal{W}d(C^2+\mathcal{X}^2)}{2L}}$, we get:
\begin{align}
   \mathbb{E}\left[ \sum_{t=0}^T \ell_t(w_t)  \right]  - \min_{w^*\in\mathbb{W}} \sum_{t=0}^T \ell_t(w^*) \leq \sqrt{\mathcal{W}d(C^2+\mathcal{X}^2)L} T^{3/4}
\end{align}

To prove Eq.~\ref{eq:random_action} for Alg.~\ref{alg:random_search_action}, we follow the similar strategy in the proof of Alg.~\ref{alg:random_search}.

Denote $\epsilon \sim [-1,1]$ as uniformly sampling $\epsilon$ from the interval $[-1,1]$, $e\sim \{-1,1\}$ as uniformly sampling $e$ from the set containing $-1$ and $1$. Consider the loss function $\tilde{\ell}_t(w) = \mathbb{E}_{\epsilon\sim [-1,1]}[(w^T x_t + \delta \epsilon - y_t)^2]$. One can show that the gradient of $\tilde{\ell}_t(w)$ with respect to $w$ is:
\begin{align}
    \nabla_{w}\tilde{\ell}_t(w) = \frac{1}{\delta}\mathbb{E}_{e\sim \{-1,1\}}[e(w^Tx_t + \delta e - y_t)^2 x_t]
\end{align} As we can see that the descent direction we take in Alg.~\ref{alg:random_search_action} is actually an unbiased estimate of $\nabla_{w}\tilde{\ell}_t(w)|_{w=w_t}$. Hence Alg.~\ref{alg:random_search_action} can be considered as running OGD with unbiased estimates of gradients on the sequence of loss functions $\tilde{\ell}_t(w)$. For an unbiased estimate of the gradient, $\frac{1}{\delta} e(w_t^T x_t +\delta e - y_t)^2 x_t$, its norm is bounded as $(C^2 + 1)\mathcal{X}/\delta$. Note that different from Alg.~\ref{alg:random_search}, here the maximum norm of the unbiased gradient \emph{is independent of feature dimension $d$}. Now we apply Lemma 3.1 from \cite{flaxman2005online} on $\tilde{\ell}_t$, to get:
\begin{align}
\label{eq:tilde_random_action}
    \mathbb{E}\left[ \sum_{t=0}^T \tilde{\ell}_t(w_t)\right] - \min_{w^*\in\mathbb{W}}\sum_{t=0}^T \tilde{\ell}_t(w^*) \leq \frac{\mathcal{W}(C^2 + 1)\mathcal{X}}{\delta}\sqrt{T}
\end{align}
Again we can bound the difference between $\tilde{\ell}_t(w)$ and $\ell_t(w)$ for any $w$ using the fact that $(\hat{y}_t - y_t)^2$ is Lipschitz continuous with respect to prediction $\hat{y}_t$ with Lipschitz constant $C$: 
\begin{align}
    |\tilde{\ell}_t(w) - &\ell_t(w)| \nonumber\\&= |\mathbb{E}_{\epsilon\sim [-1,1]} [(w^Tx_t + \delta\epsilon - y_t)^2 - (w^T x_t - y_t)^2]|  \nonumber\\
    &\leq \mathbb{E}_{\epsilon\sim [-1,-1]}[C\delta |\epsilon|] \leq C\delta
\end{align} Substitute the above inequality back to Eq.~\ref{eq:tilde_random_action}, rearrange terms:
\begin{align}
    \mathbb{E}\left[\sum_{t=0}^T \tilde{\ell}_t(w_t)\right] - \min_{w^*\in\mathbb{W}}\sum_{t=0}^T \tilde{\ell}_t(w^*) \leq &\frac{\mathcal{W}(C^2+1)\mathcal{X}}{\delta}\sqrt{T} \nonumber\\&+ 2C\delta T
\end{align}
Set $\delta = T^{-0.25}\sqrt{\frac{\mathcal{W}(C^2+1)\mathcal{X}}{2C}}$, we get:
\begin{align}
  \mathbb{E}\left[\sum_{t=0}^T \tilde{\ell}_t(w_t)\right] - \min_{w^*\in\mathbb{W}}\sum_{t=0}^T \tilde{\ell}_t(w^*) \leq \sqrt{\mathcal{W}(C^2+1)\mathcal{X}C}T^{3/4}.  
\end{align}
\end{proof}

In summary, we showed that in general online learning, the advantage of random exploration in action space (i.e., Alg.~\ref{alg:random_search_action} in the linear contextual bandit setting) compared to random exploration in parameter space (i.e., Alg.~\ref{alg:random_search} in pure bandit setting) comes from the fact that the regret bound of Alg.~\ref{alg:random_search_action} does not explicitly depend on the feature dimension $d$.\footnote{Note that if we consider multivariate regression problem, than the regret bound will explicitly depend on the dimension of the action space. }

\subsection{Analysis on RL}
\label{sec:analysis-rl}
We now consider the general RL setting. For simplicity, we just focus on finite horizon control problems with deterministic dynamics $x_{t+1} = f(x_t, a_t)$, fixed initial position $x_1$, one-step cost function $c(x,a)$ and horizon $H$. We assume $x\in \mathbb{X}\subseteq\mathbb{R}^d$ and $a\in \mathbb{A}\subseteq \mathbb{R}$, where $d$ is large. Namely we consider control problem with high feature dimension $d$ and low action dimension $(1D)$.  Given any sequence of actions $\mathbf{a} = {a_1, ..., a_H}$, the total cost is fully determined by the sequence of actions:
\begin{align}
    & J(\mathbf{a}) =  \sum_{t=1}^H c(x_t, a_t) \\
    & s.t., x_{t+1} = f(x_t, a_t),  t \in [1,H-1].
\end{align}
Given any $\mathbf{a}\in\mathbb{R}^H$, with the knowledge of dynamics $f$, one can easily compute the gradient of $J$ with respect $\mathbf{a}$, denoted as $\nabla_{\mathbf{a}}J(\mathbf{a})$. With $\nabla_{\mathbf{a}}J(\mathbf{a})$, one can perform gradient descent on the action sequence, which has been used in the trajectory optimization literature. 

In the model-free setting, we are no longer able to  exactly compute  $\nabla_{\mathbf{a}}J(\mathbf{a})$. However, we can again use \emph{exploration in the action space} to form an estimation of $\nabla_{\mathbf{a}}J(\mathbf{a})$. Given $\mathbf{a}$, we sample $u_H\sim \mathbb{S}_H$ (i.e., $u_H$ is uniformly sampled from a $H$-dim unit sphere),\footnote{Uniformly sampling from a $H-$dim unit sphere can be implemented by first sampling $u_t$ from normal distribution for each time step $t$, and then forming the perturbation vector as  $[u_1,...,u_H]^T/ \sqrt{\sum_t u_t^2}$. } with a small $\delta\in\mathbb{R}^+$, we can formulate an estimation of $\nabla_{\mathbf{a}}J(\mathbf{a})$ as:
\begin{align}
\label{eq:open_loop}
   \tilde{\nabla}_{\mathbf{a}}J(\mathbf{a})  = \frac{H}{\delta} J(\mathbf{a} + \delta u_H) u_H,
\end{align} where $\tilde{\nabla}_{\mathbf{a}}J(\mathbf{a}) \in\mathbb{R}^H$. Again, we can show that $\tilde{\nabla}_{\mathbf{a}}J(\mathbf{a})$ is an unbiased estimate of the gradient of a smoothed version of $J$: $\mathbb{E}_{v\sim \mathbb{B}_H} [J(\mathbf{a} + \delta v)$], where $\mathbb{B}_H$ is a $H$-dim unit ball.

Now let us take policy into consideration. We assume parameterized deterministic linear policy $w^T x$ that takes state $x$ as input and outputs an action deterministically. At any state $x$, the Jacobian of the policy with respect to parameter $w$ is simply $x$. A given $w$ fully determines the total cost:
\begin{align}
&J(w) = \sum_{t=1}^H c(x_t, a_t) \\
& s.t., a_t = w^T x_t, x_{t+1} = f(x_t, a_t), t \in [1,H-1]. 
\end{align} If we know the model and cost function, we can exactly compute the gradient of $J(w)$ with respect to $w$. In model-free setting, we have two ways to estimate the $\nabla_{w}J(w)$. 

The first approach uses random exploration in action space. 
Given $w$, we first execute $\pi(x) = w^T x$ on the real system to generate a trajectory $\tau = {x_1,a_1,...,x_H, a_H}$. Denote $\mathbf{a} = {a_1,...,a_H}$, we can compute $\tilde{\mathbf{a}} = \mathbf{a} + \delta u_H$, and execute $\tilde{\mathbf{a}}$ in a open-loop manner and receive $J(a+\delta u_H)$ at the end of the simulation, from which we compute $\tilde{\nabla}_{\mathbf{a}}J(\mathbf{a})$ as shown in  Eq.~\ref{eq:open_loop}. Now by chain rule and using the Jacobians of the policy, we can estimate the gradient $\nabla_{w}J(w)$ as:
\begin{align}
\label{eq:action_space_rl}
    \tilde{\nabla}_{w}J(w) = X \tilde{\nabla}_{\mathbf{a}}J(\mathbf{a}) = \frac{H J(\tilde{\mathbf{a}})}{\delta} X u_H,
\end{align} where $X\in \mathbb{R}^{d\times H}$ and the $i$'th column of $X$ is the state $x_i$ along the trajectory $\tau$, and $u_H\sim \mathbb{S}_H$.  

The second approach estimates $\nabla_{w}J(w)$ by \emph{exploration in parameter space}. Denote $\tilde{w} = w + \delta u_d$, where $u_d\sim \mathbb{S}_{d}$, we can estimate $\nabla_{a}J(w)$ as:
\begin{align}
\label{eq:parameter_space_rl}
\hat{\nabla}_w J(w) = \frac{d J(\tilde{w})}{\delta} u_d,
\end{align} where we can show that $\hat{\nabla}_{w}J(w)$ is an unbiased estimate of the gradient of a smoothed version of $J(w)$: $\mathbb{E}_{v\sim \mathbb{B}_d}[J(w + \delta v_d )]$.

Unlike our previous setting of linear regression, the optimization problem in RL is non-convex and it is difficult to analyze its convergence. However, the norm of the gradient estimator still plays an important role and the regret is dependent on the maximum value of the norm.
In both $\tilde{\nabla}_w J(w)$ from Eq.~\ref{eq:action_space_rl} and $\hat{\nabla}_w J(w)$ from Eq.~\ref{eq:parameter_space_rl}, the terms $J(\tilde{a})$ and $J(\tilde{w})$ are at the similar scale, $\|u_d\| = \|u_H\| = 1$ as they are sampled from the corresponding unit sphere, and $\delta$ is a small number close to zero. When comparing the norm of both estimators $\hat{\nabla}_w J(w)$ and $\tilde{\nabla}_w J(w)$, we can see that the terms that really matter are $d$, $H$, and $\|X\|_F$. Again we see that the norm of the estimator $\hat{\nabla}_w J(w)$ explicitly depends on $d$--the feature dimension, while the norm of the estimator $\tilde{\nabla}_w J(w)$ does not explicitly depend on $d$, but instead the horizon length $H$ and the norm of the state $\|x\|$ which is problem dependent.  

So under what situations we may encounter that the estimator $\hat{\nabla}_w J(w)$ has smaller norm than the estimator $\tilde{\nabla}_w J(w)$ (i.e., random exploration in parameter space is preferred)? We can see that for problems where $H > d$,  which in fact is the case in most of continuous control tasks in OpenAI Gym, we could expect that random exploration in parameter space performs better than random exploration in action space. Another possible situation is that the dynamics under the current policy $w$ (i.e., $x_{t+1} = f(x_t, w^Tx_t)$) is unstable: the norm of the state $x_t$ grows exponentially with respect to $t$ along the trajectory, which corresponds to the LQR example that \citet{mania2018simple} demonstrated in their work. Such exponential dependency on horizon $H$ could lead to an estimator $\tilde{\nabla}_w J(w)$ with extremely large norm.

\subsection{Implementation Details}
\label{sec:implementation-details}

\subsubsection{Tuning Hyperparameters for ARS}
We tune the hyperparameters for ARS \cite{mania2018simple} in both MNIST and linear regression experiments, by choosing a candidate set of values for each hyperparameter: stepsize, number of directions sampled, number of top directions chosen and the perturbation length along each direction. The candidate hyperparameter values are shown in Table \ref{tab:hyperparam}. 

\begin{table}[t]
    \centering
    \begin{tabular}{|c|c|}
    \hline
    Stepsize &  $0.001, 0.005, 0.01, 0.02, 0.03$\\
    \hline
    \# Directions &  $10, 50, 100, 200, 500$\\
    \hline
    \# Top Directions & $5, 10, 50, 100, 200$\\
    \hline
    Perturbation & $0.001, 0.005, 0.01, 0.02, 0.03$ \\
    \hline
    \end{tabular} 
    \caption{Candidate hyperparameters used for tuning in ARS \textbf{V2-t} experiments}
    \label{tab:hyperparam}
\end{table}
           
We use the hyperparameters shown in Table \ref{tab:chosen-hyperparams} chosen through this tuning for each of the experiments in this work. The hyperparameters are chosen by averaging the test squared loss across three random seeds (different from the 10 random seeds used in actual experiments) and chosing the setting that has the least mean test squared loss after 100000 samples.

\begin{table}[t]
    \centering
    \begin{tabular}{|c|c|c|c|c|}
    \hline
    Experiment & Stepsize & \# Dir. & \# Top Dir. & Perturbation\\
    \hline
    MNIST     &  0.02 & 50 & 20 & 0.03\\
    \hline
    LR $d=10$     & 0.03 & 10 & 10 & 0.03 \\
    \hline
    LR $d=100$ & 0.03 & 10 & 10 & 0.02 \\
    \hline
    LR $d=1000$ & 0.03 & 200 & 200 & 0.03 \\
    \hline
    \end{tabular}
    \caption{Hyperparameters chosen for ARS \textbf{V2-t} in each experiment. LR is short-hand for Linear Regression.}
    \label{tab:chosen-hyperparams}
\end{table}

\begin{table}[t]
    \centering
    \begin{tabular}{|c|c|c|}
    \hline
    Experiment     &  Learning Rate & Batch size\\
    \hline
    MNIST     &  0.001 & 512\\
    \hline
    LR $d=10$ & 0.08 & 512\\
    \hline
    LR $d=100$ & 0.03 & 512\\
    \hline
    LR $d=1000$ & 0.01 & 512\\
    \hline
    \end{tabular}
    \caption{Learning rate and batch size used for REINFORCE experiments. We use an ADAM \cite{kingma2014adam} optimizer for these experiments.}
    \label{tab:hyperparam-reinforce}
\end{table}

\begin{table}[t]
    \centering
    \begin{tabular}{|c|c|c|}
    \hline
    Experiment     &  Learning Rate & Batch size\\
    \hline
    LR $d=10$ & 2.0 & 512\\
    \hline
    LR $d=100$ & 2.0 & 512\\
    \hline
    \end{tabular}
    \caption{Learning rate and batch size used for Natural REINFORCE experiments. Note that we decay the learning rate after each batch by $\sqrt{T}$ where $T$ is the number of batches seen.}
    \label{tab:hyperparam-nreinforce}
\end{table}

\subsubsection{MNIST Experiments}
\label{sec:mnist-details}

The CNN architecture used is as shown in Figure \ref{fig:arch}\footnote{This figure is generated by adapting the code from \url{https://github.com/gwding/draw_convnet}}. The total number of parameters in this model is $d=21840$. For supervised learning, we use a cross-entropy loss on the softmax output with respect to the true label. To train this model, we use a batch size of 64 and a stochastic gradient descent (SGD) optimizer with learning rate of 0.01 and a momentum factor of 0.5. We evaluate the test accuracy of the model over all the $10000$ images in the MNIST test dataset. 

\begin{figure}[t]
    \centering
    \includegraphics[width=\linewidth]{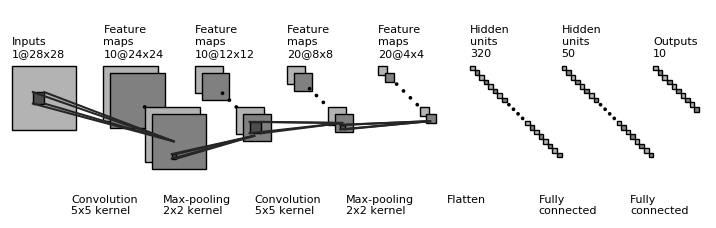}
    \caption{CNN architecture used for the MNIST experiments}
    \label{fig:arch}
\end{figure}

For REINFORCE, we use the same architecture as before. We train the model by sampling from the categorical distribution parameterized by the softmax output of the model and then computing a $\pm 1$ reward based on whether the model predicted the correct label. The loss function is the REINFORCE loss function given by,
\begin{equation}
    J(\theta) = \frac{1}{N} \sum_{i=1}^N r_i \log(\mathbb{P}(\hat y_i|x_i, \theta))
\end{equation}
where $\theta$ is the parameters of the model, $r_i$ is the reward obtained for example $i$, $\hat y_i$ is the predicted label for example $i$ and $x_i$ is the input feature vector for example $i$. The reward $r_i$ is given by $r_i = 2*\mathbb{I}[\hat y_i = y_i] - 1$, where $\mathbb{I}$ is the $0-1$ indicator function and $y_i$ is the true label for example $i$.

For ARS \textbf{V2-t}, we use the same architecture and reward function as before. The hyperparameters used are shown in Table \ref{tab:chosen-hyperparams} and we closely follow the algorithm outlined in \cite{mania2018simple}.

\subsubsection{Linear Regression Experiments}
\label{sec:linreg-details}

We generate training and test data for the linear regression experiments as follows: we sampled a random $d+1$ dimensional vector $w$ where $d$ is the input dimensionality. We also sampled a random $d \times d$ covariance matrix $C$. The training and test dataset consists of $d+1$ vectors $x$ whose first element is always $1$ (for the bias term) and the rest of the $d$ terms are sampled from a multivariate normal distribution with mean $\mathbf{0}$ and covariance matrix $C$. The target vectors $y$ are computed as $y = w^Tx + \epsilon$ where $\epsilon$ is sampled from a univariate normal distribution with mean $0$ and standard deviation $0.001$.

We implemented both SGD and Newton Descent on the mean squared loss, for the supervised learning experiments. For SGD, we used a learning rate of $0.1$ for $d=10, 100$ and a learning rate of $0.01$ for $d=1000$, and a batch size of 64. For Newton Descent, we also used a batch size of 64. To frame it as a one-step MDP, we define a reward function $r$ which is equal to the negative of mean squared loss. Both REINFORCE and ARS \textbf{V2-t} use this reward function. To compute the REINFORCE loss, we take the prediction of the model $\hat{w}^Tx$, add a mean $0$ standard deviation $\beta = 0.5$ Gaussian noise to it, and compute the reward (negative mean squared loss) for the noise added prediction. The REINFORCE loss function is then given by
\begin{equation}
    J(w) = \frac{1}{N} \sum_{i=1}^N r_i \frac{- (y_i - \hat{w}^Tx_i)^2}{2\beta^2}
\end{equation}
where $r_i = -(y_i - \hat y_i)^2$, $\hat y_i$ is the noise added prediction and $\hat{w}^Tx_i$ is the prediction by the model. We use an Adam optimizer with learning rate and batch size as shown in Table \ref{tab:hyperparam-reinforce}. For the natural REINFORCE experiments, we estimate the fisher information matrix and compute the descent direction by solving the linear system of equations $Fx = g$ where $F$ is the fisher information matrix and $g$ is the REINFORCE gradient. We use SGD with a $O(1/\sqrt{T})$ learning rate, where $T$ is the number of batches seen, and batch size as shown in Table \ref{tab:hyperparam-nreinforce}. 

For ARS \textbf{V2-t}, we closely follow the algorithm outlined in \cite{mania2018simple}.
\subsubsection{LQR Experiments}
\label{sec:LQR_details}
In the LQR experiments, we randomly generate linear dynamical systems $x_{t+1} = Ax_t + B a_t + \xi_t$, where $A\in\mathbb{R}^{100\times 100}$, $B\in\mathbb{R}^{100}$, $x\in\mathbb{R}^{100}$, $a\in\mathbb{R}^1$, and the noise $\xi_t \sim \mathcal{N}(0_{100}, cI_{100\times 100})$ with a small constant $c\in\mathbb{R}^+$.  We explicitly make sure that the maximum eigenvalue of $A$ is less than 1 to avoid instability. We fix a quadratic cost function $c(x,a) = x^T Q x + a R a$, where $Q = I_{100\times 100}$, and $R = 1e-3$. The stochastic linear policy is defined as a Gaussian distribution $\pi(\cdot|x;w,\sigma) = \mathcal{N}\left(w^T x, (\exp(\sigma))^2\right)$ where $w$ and $\sigma$ are parameters to learn. We initialize $w$ such that the resulting markov chain $x_{t+1} = (A+Bw^T)x_t$ is unstable ($(A+Bw^T)$ has eigenvalues larger than one). We simple set $\alpha = 0$, resulting the initial standard deviation to 1. 

\begin{table}[t]
    \centering
    \begin{tabular}{|c|c|}
    \hline
    Stepsize &  $0.005, 0.01, 0.02, 0.03$\\
    \hline
    \# Directions &  $20, 50, 100$\\
    \hline
    \# Top Directions & $10, 25, 50$\\
    \hline
    Perturbation & $0.01, 0.02, 0.03, 0.04$ \\
    \hline
    \end{tabular} 
    \caption{Candidate hyperparameters used for tuning in ARS \textbf{V1-t} in LQG at $H=50$ and $H= 100$}
    \label{tab:hyperparam_LQG}
\end{table}

We perform hyperparameter grid search for ARS V1-t at the specific horizon $H=50$  \footnote{We tuned hyperparameter for $H=100$ but found the same set of hyperparameters as the one for $H=50$.} using two random seeds (different from the test random seeds). The candidate parameters are listed in Table~\ref{tab:hyperparam_LQG} and then use the best set of hyperparameters for all $H$.


\end{document}